\documentclass[9pt,twocolumn,letterpaper]{article}

\usepackage{wacv}
\usepackage{times}
\usepackage{epsfig}
\usepackage{graphicx}
\usepackage{amsmath}
\usepackage{amssymb}
\usepackage{algorithm}
\usepackage{algpseudocode}
% \usepackage[accsupp]{axessibility}  % Improves PDF readability for those with disabilities.

% Added by JD
% \usepackage{subfig}
\usepackage{booktabs, adjustbox, multirow, tabularx, subfig}
\usepackage{amsthm,amsfonts}

% Include other packages here, before hyperref.

%%%%%%%%%%%%%%%%%%%%%%%%%%%%%%%%%%%%%%%%%%%%%%%%%%%%%%%%%%%%%%%%%%%%%%%%%%%%%%%%
%
%%% IMPORTANT - These next three lines are crucial.
%               (1) PLEASE enter your paper ID (given by CMT) replacing the
%                   '****' right below here with the ID from CMT.
%               (2) Leave the \wacvfinacopy commented out for the submission
%                   version, but UNCOMMENT it for your CAMERA-READY upload.
%               (3) For the camera-ready version, you may be asked to set a
%                   starting page number.  If so, replace the '9876' below with
%                   the starting page number assigned by the publication chair.
 
%(1)
 % Enter the WACV Paper ID here

%(2)
\wacvfinalcopy % *** Uncomment this line for the final submission

%(3)
\ifwacvfinal
 % *** Enter the assigned starting page number (instead of 9876)
\fi

%%%%%%%%%%%%%%%%%%%%%%%%%%%%%%%%%%%%%%%%%%%%%%%%%%%%%%%%%%%%%%%%%%%%%%%%%%%%%%%%

% If you comment hyperref and then uncomment it, you should delete
% egpaper.aux before re-running latex.  (Or just hit 'q' on the first latex
% run, let it finish, and you should be clear).
\ifwacvfinal
\usepackage[breaklinks=true,bookmarks=false]{hyperref}
\else
\usepackage[pagebackref=true,breaklinks=true,colorlinks,bookmarks=false]{hyperref}
\fi

% Pages are numbered in submission mode, and unnumbered in camera-ready
% \ifwacvfinal
% \setcounter{page}{\assignedStartPage}
% \else
\pagestyle{empty}
% \fi

\begin{document}

%%%%%%%%% TITLE
\title{Nonnegative Low-Rank Tensor Completion via Dual Formulation with Applications to Image and Video Completion}

\author{Tanmay Kumar Sinha\thanks{Equal contribution}\\
IIIT, Hyderabad\\
{\tt\small tanmay.kumar@research.iiit.ac.in}
% For a paper whose authors are all at the same institution,
% omit the following lines up until the closing ``}''.
% Additional authors and addresses can be added with ``\and'',
% just like the second author.
% To save space, use either the email address or home page, not both
\and
Jayadev Naram\footnotemark[1]\\
IIIT, Hyderabad\\
{\tt\small jayadev.naram@research.iiit.ac.in}
\and
Pawan Kumar\\
IIIT, Hyderabad\\
{\tt\small pawan.kumar@iiit.ac.in}
}

% \author{Tanmay Kumar Sinha\thanks{Equal contribution}
% % IIIT, Hyderabad\\
% % {\tt \small tanmay.kumar@research.iiit.ac.in}
% % 
% \and Jayadev Naram\footnotemark[1]
% % IIIT, Hyderabad\\
% % {\tt \small jayadev.naram@research.iiit.ac.in}
% % 
% \and Pawan Kumar\\
% International Institute of Information Technology, Hyderabad\\
% {\tt\small \{tanmay.kumar,jayadev.naram\}@research.iiit.ac.in},
% {\tt\small pawan.kumar@iiit.ac.in}
% % For a paper whose authors are all at the same institution,
% % omit the following lines up until the closing ``}''.
% % Additional authors and addresses can be added with ``\and'',
% % just like the second author.
% % To save space, use either the email address or home page, not both
% % \and
% % Second Author\\
% % Institution2\\
% % First line of institution2 address\\
% % {\tt\small tanmay.kumar@research.iiit.ac.in}
% }

\maketitle
\thispagestyle{empty}

\theoremstyle{plain}
\newtheorem{theorem}{Theorem}
\newtheorem{corollary}[theorem]{Corollary}
\newtheorem{lemma}[theorem]{Lemma}
\newtheorem{prop}[theorem]{Proposition}
\newtheorem{assume}{Assumption}
\newtheorem{remark}[theorem]{Remark}

\theoremstyle{definition}
\newtheorem{mydef}[theorem]{Definition}
\newtheorem{example}[theorem]{Example}

\newcommand{\Norm}[1]{\left\lVert#1\right\rVert}

\newcommand{\R}{\mathbb{R}}
\newcommand{\W}{\mathcal{W}}
\newcommand{\Y}{\mathcal{Y}}
\newcommand{\Z}{\mathcal{Z}}
\newcommand{\s}{\mathcal{S}}
\newcommand{\Rtensor}{\R^{n_1\times \cdots \times n_K}}
\newcommand{\Rptensor}{\R_+^{n_1\times \cdots \times n_K}}
\newcommand{\A}{\mathcal{A}}
\newcommand{\M}{\mathcal{M}}
\newcommand{\N}{\mathcal{N}}
\newcommand{\h}{\mathcal{H}}
\newcommand{\T}{\mathcal{T}}
\newcommand{\Prob}{\mathbb{P}}
\newcommand{\Dist}{\mathcal{D}}
\newcommand{\perpProj}{\mathcal{P}^\perp}
\newcommand{\bb}{\mathbb{B}}
\newcommand{\Sprod}{\mathbb{S}_{xy}}
\newcommand{\highlight}[1]{\textsl{\textbf{#1}}}
\newcommand{\mapping}[3]{#1:#2\rightarrow #3}
\newcommand{\doubt}{\highlight{[??]}}
\newcommand{\bigvert}[2]{\left.#1\right|_{#2}}
\newcommand{\sdnn}[1]{${#1}$}
\newcommand{\bsdnn}[1]{$\boldsymbol{#1}$}
\newcommand{\ifthen}[2]{\textbf{(#1)}\boldsymbol{\implies}\textbf{(#2)}}
\newcommand{\bsdn}[1]{\boldsymbol{#1}}
\newcommand{\forward}{$(\implies)$}
\newcommand{\converse}{$(\impliedby)$}
\newcommand{\Lt}[1]{\underset{#1\rightarrow 0}{Lt}}
\newcommand{\norm}[1]{\|#1\|}
\newcommand{\dparder}[2]{\dfrac{\partial #1}{\partial x_{#2}}}
\newcommand{\fparder}[2]{\frac{\partial #1}{\partial x_{#2}}}
\newcommand{\parder}[2]{\partial #1/\partial x_{#2}}
\newcommand{\parop}[1]{\dfrac{\partial}{\partial x_{#1}}}
\newcommand{\innerproduct}[2]{\langle #1, #2 \rangle}
\newcommand{\genst}{St_B(n,p)}
\newcommand{\igenst}[1]{St_{B_{#1}}(n_{#1},p)}
\newcommand{\realmat}[2]{\R^{#1\times #2}}
\newcommand{\Skew}{\mathcal{S}_{skew}(p)}
\newcommand{\Sym}{\mathcal{S}_{sym}(p)}
\newcommand{\XperpB}{X_{B^\perp}}
\newcommand{\polarRetr}{R^{polar}_X}
\newcommand{\qrRetr}{R^{QR}_X}
\newcommand{\vectransport}{\mathcal{T}}
\newcommand{\grad}{\text{grad}\,}
\newcommand{\hess}{\text{Hess}\,}
\newcommand{\unfold}[1]{\textit{unfold}_{#1}}
\newcommand{\fold}[1]{\textit{fold}_{#1}}

%%%%%%%%% ABSTRACT
\begin{abstract}
Recent approaches to the tensor completion problem have often overlooked the nonnegative structure of the data. We consider the problem of learning a nonnegative low-rank tensor, and using duality theory, we propose a novel factorization of such tensors. The factorization decouples the nonnegative constraints from the low-rank constraints. 
The resulting problem is an optimization problem on manifolds, and we propose a variant of Riemannian conjugate gradients to solve it.
We test the proposed algorithm across various tasks such as colour image inpainting, video completion, and hyperspectral image completion. Experimental results show that the proposed method outperforms many state-of-the-art tensor completion algorithms.

% Recent approaches to the tensor completion problem have often overlooked nonnegative structure of the data. We consider the problem of learning a nonnegative low-rank tensor, and using duality theory, we propose a novel factorization of such tensors. The factorization decouples the nonnegative constraints from the low-rank constraints. 
% The resulting problem is an optimization problem on manifolds and we propose a variant of Riemannian conjugate gradients to solve it.
% The proposed algorithm is tested across a wide range of tasks such as colour image inpainting, video completion and hyperspectral image completion. Experimental results show that the proposed method outperforms many state-of-the-art tensor completion algorithms.

% A salient feature of the proposed factorization scheme is it decouples the low-rank and the structural constraints onto separate factors. We formulate the optimization problem on the Riemannian spectrahedron manifold, where the Riemannian framework allows to develop computationally efficient conjugate gradient and trust-region algorithms. Experiments on problems such as standard/robust/nonnegative matrix completion, Hankel matrix learning and multi-task learning demonstrate the efficacy of our approach. 
\end{abstract}

%%%%%%%%% BODY TEXT
\section{Introduction}
Recent years have seen an increase in the quantity of multidimensional data available, such as colour images, video sequences, and 3D images. Flattening multidimensional data to matrices usually leads to loss of information as matrices cannot capture the inherent structures present in most multidimensional data. This has led to increased research on tensor-based techniques for handling such data. 

The low-rank tensor completion problem aims to recover an original tensor from partial observations. A well-known \cite{dual} formulation for such problems is 
\begin{alignat}{3}
\underset{\W\in\Rtensor}{\text{min}}& &&C\,L(\W,\Y_\Omega) + R(\W), \label{eqn:general_tensor_problem}
\end{alignat}
where $\Y_\Omega\in \Rtensor$ is a partially observed tensor for indices given in the set $\Omega$, $\mapping{L}{\Rtensor}{\R}$ is a loss function, $C>0$ denotes the cost parameter, and $R$ is a regularizer enforcing low-rank constraint. 

In many applications of tensor reconstruction such as color image recovery, video completion, recommendation systems, and link prediction, the data is nonnegative. Problem \eqref{eqn:general_tensor_problem} does not enforce this structural constraint, and as such, the recovered tensors might contain negative entries. 

To incorporate these constraints, we consider the nonnegative low-rank tensor learning problem of the form:
\begin{equation}
\begin{aligned}
\underset{\W\in\Rtensor}{\min}& &&C\|\W_\Omega-\Y_\Omega\|^2 + R(\W)\\
\text{subject to}& &&\W \ge 0, \label{eqn:primal_problem}
\end{aligned}
\end{equation}
where $(\W_\Omega)_{i_1,\ldots,i_K} = \W_{i_1,\ldots,i_K}$ if $({i_1,\ldots,i_K})\in \Omega$.
We convert the problem \eqref{eqn:primal_problem} into a minimax problem by constructing a partial dual similar to \cite{structured_matrix_completion}. This leads to a factorization of the tensor $\W$ in a form with separate factors for the nonnegative and low-rank constraints. The minimax problem has a rich geometric structure. We employ a Riemannian conjugate gradient algorithm to exploit this structure and develop an efficient solution.

The main contributions of the paper are listed below.
\begin{itemize}
    \item We propose a novel factorization for modeling nonnegative low-rank tensors.
    \item We develop an algorithm exploiting the inherent geometric structure of this factorization.
    \item Experiments carried out on several real-world datasets show that the proposed algorithm outperforms state-of-the-art tensor completion algorithms.
\end{itemize}
The rest of the paper is organized as follows. In Section 2, we introduce the notation used in the paper. In Section 3, we review previous work related to the tensor completion problem. In Sections 4 and 5, we develop the dual framework and present our algorithm. Section 6 details experiments carried out to compare our algorithm with several state-of-the-art algorithms. In Section 7, we end with concluding remarks.

\section{Notation}

For a full treatment of tensors, we refer to \cite{kolda_review}. Here, we outline the basic notation we use for tensors. We denote tensors by calligraphic capital letters and matrices by capital letters. 
For a matrix $X\in\R^{m\times n}$, the nuclear norm of $X$, denoted by $\|X\|_*$, is the $l_1$-norm of the singular values of $X$. The inner product of two same-sized tensors $\W,\mathcal{U}\in \Rtensor$ is the sum of the products of their entries
\begin{equation*}
    \innerproduct{\W}{\mathcal{U}} = \sum_{i_1}^{n_1}\sum_{i_2}^{n_2}\cdots \sum_{i_K}^{n_K} \W_{i_1,\ldots,i_K}\mathcal{U}_{i_1,\ldots,i_K}.
\end{equation*}
A mode-$k$ fiber of a tensor $\W\in\Rtensor$, denoted by $\W_{i_1,\ldots,i_{k-1},:,i_{k+1},\ldots,i_K}$, is a vector obtained by fixing all but $k$-th index of $\W$. The mode-$k$ unfolding of a tensor $\W\in\Rtensor$ is a matrix $W_k\in \R^{n_k\times n_1\ldots n_{k-1}n_{k+1}\ldots n_K}$ formed by arranging the mode-$k$ fibers to be the columns of the resulting matrix, i.e., 
\begin{equation*}
W_k = [\W_{i_1,\ldots,i_{k-1},:,i_{k+1},\ldots,i_K}]\;\forall i_j, j \ne k.
\end{equation*}
The reverse of unfolding operation is called the folding operation which converts a given matrix to a tensor of specific order. We also represent the mode-$k$ unfolding by the map $\mapping{\unfold{k}}{\Rtensor}{\R^{n_k\times n_1\ldots n_{k-1}n_{k+1}\ldots n_K}}$ such that $\unfold{k}(\W) = W_k$, and the mode-$k$ folding by the map $\mapping{\fold{k}}{\R^{n_k\times n_1\ldots n_{k-1}n_{k+1}\ldots n_K}}{\Rtensor}$. The $k$-mode product of a tensor $\W\in\Rtensor$ with a matrix $X\in \R^{m\times n_k}$ is denoted by $\W \times_k X\in \R^{n_1 \times\ldots\times n_{k-1}\times m\times n_{k+1}\times \ldots n_K}$, defined element-wise as follows:
\begin{equation*}
    (\W\times_k X)_{i_1,\ldots,i_{k-1},j,i_{k+1},\ldots,i_K} = \sum_{i_k}^{n_k} \W_{i_1,\ldots,i_K}X_{j,i_k}.
\end{equation*}
Then we have 
\begin{equation*}
\mathcal{U} = \W\times_k X \Longleftrightarrow U_k = XW_k.
\end{equation*}

\section{Previous Work}
Tensor completion for visual data recovery was introduced in \cite{oldest_lrtc}, building on the framework for low-rank matrix completion using matrix trace norm regularizer. The trace norm for tensors can be defined in several ways, and as such there exist multiple formulations for trace norm regularized tensor completion. \cite{oldest_lrtc}, \cite{scalable_tensor_learning}, and \cite{spectral_regularization} use the regularizer $R(\W) = \sum_{k=1}^{K}\|W_k\|_*$, known as the overlapped trace norm, which promotes a lower Tucker (multilinear) rank in the recovered tensors.

Another popular trace norm regularizer is the latent trace norm regularizer. Methods that use this formulation, model the tensor as a sum of $K$ individual tensors, and the latent trace norm amounts to an $l_1$ norm regularizer that promotes sparsity. A few examples of such methods are \cite{ffw_lrtc}, which uses a Frank-Wolfe algorithm for optimization, and \cite{multitask_meets_tensor}, which uses a scaled variant of the latent trace norm.

The paper \cite{dual} uses a formulation that models the recovered tensor as a sum of non-sparse tensors and proposes a regularizer that uses $l_2$ norm regularizer as opposed to an $l_1$ norm. This allows for the development of a dual framework for tensor completion, which is solved using methods from Riemannian optimization.

Another class of tensor completion algorithms attempt to exploit the smoothness properties present in real-world tensor data like hyperspectral images and 3D images. The paper \cite{smooth_parafac} integrates the smooth PARAFAC decompositions for partially observed tensors and develops two variants using the total variation and quadratic variation. \cite{lrtc_tv} adopts total variation(TV) regularizer to formulate the model, and \cite{smf_lrtc} uses smooth matrix factorizations to incorporate tensor smoothness constraints.  

Tensor decomposition methods form another class of algorithms. Tensor decompositions like Tucker and CP decompositions act as generalizations of the familiar notion of singular value decomposition of matrices. \cite{geomCG} and \cite{RPrecon} exploit the Riemannian geometry of the set of fixed multi-linear rank tensors to efficiently learn the Tucker decomposition. \cite{bcpf} employs a Bayesian probabilistic CP decomposition model to recover the incomplete tensors. 

Other methods include \cite{t-svd}, which uses another form of tensor singular value decomposition to define a tensor rank known as the tubal rank. \cite{tmac} enforces the low-rank by factorizing the unfoldings of the tensor as low-rank matrices.

In \cite{ncpc}, an algorithm is proposed that uses a block coordinate descent method for nonnegative tensor completion, utilizing the CP decomposition. \cite{nonneg_ieee_access} performs nonnegative tensor completion based on low-rank Tucker decomposition.
Most of the research considering nonnegative tensors is devoted to learning nonnegative tensor decompositions. A few examples are \cite{nonneg_cp_hyper}, \cite{nonneg_tucker_decomp}, \cite{alternating_prox_nonneg}, \cite{nonneg_tensor_train}, etc. These methods cannot perform tensor completion task on incomplete data.

\section{Dual Framework}
Problem \eqref{eqn:primal_problem} models the nonnegative tensor completion using a regularizer that promotes low-rank solutions. We seek to learn $\W$ as the sum $\sum \W^{(k)}$ of $K$ tensors, as detailed in \cite{dual}. For our formulation, we use the regularizer 
\begin{equation*}
R(\W) = \sum_{k=1}^K \dfrac{1}{\lambda_k}\|W_k^{(k)}\|^2_*,
\end{equation*}

Following \cite{dual}, we develop a dual formulation for problem \eqref{eqn:primal_problem}, incorporating the structural constraint of nonnegativity into the formulation. We do this following a similar approach developed in \cite{structured_matrix_completion} for nonnegative matrix completion.

A key lemma \cite{lemma_proof} used in the development of the formulation is given below. 

\begin{lemma}\label{lemma:nuclear_norm}
For a matrix $X \in \R^{d \times T}$, the nuclear norm of $X$ satisfies the following relation:
\begin{equation*}
\|X\|_*^2 = \min_{\substack{\Theta \in \mathcal{P}^d\\ \text{range}(X) \subseteq \text{range}(\Theta)}} \innerproduct{\Theta^{\dagger}X}{X}
\end{equation*} where $\mathcal{P}^d = \{ S \in \R^{d \times d}:\, S \succeq 0, \text{tr}(S) = 1\} $, $\text{range}(\Theta) = \{\Theta z:\, z \in \R^d\}$, $\Theta^\dagger$ denotes the pseudo-inverse of $\Theta$. For a given $X$, the optimal $\Theta$ is $\bar{\Theta} = \sqrt{XX^T} / \text{tr}(\sqrt{XX^T}) $. 
\end{lemma}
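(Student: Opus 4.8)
The plan is to prove both inequalities by passing to the positive semidefinite matrix $P := \sqrt{XX^T}$. Two elementary facts drive everything: from an SVD $X = U\Sigma V^T$ we get $XX^T = U\Sigma^2 U^T$ and hence $P = U\Sigma U^T$, so the eigenvalues of $P$ are exactly the singular values of $X$; consequently $\text{tr}(P) = \Norm{X}_*$ and $\text{range}(P) = \text{range}(X)$. In particular the candidate $\bar\Theta = P/\text{tr}(P)$ is feasible: it is PSD, has unit trace, and $\text{range}(\bar\Theta) = \text{range}(X)$. (We may assume $X \ne 0$; otherwise both sides vanish and the statement about $\bar\Theta$ is vacuous.)

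For the ``$\le$'' direction I would evaluate the objective at $\bar\Theta$. Since $\bar\Theta^\dagger = \text{tr}(P)\,P^\dagger$ and $XX^T = P^2$ with $P^\dagger P^2 = P$,
\[ \innerproduct{\bar\Theta^\dagger X}{X} = \text{tr}\!\big(\bar\Theta^\dagger XX^T\big) = \text{tr}(P)\,\text{tr}\!\big(P^\dagger P^2\big) = \text{tr}(P)^2 = \Norm{X}_*^2, \]
so the minimum is at most $\Norm{X}_*^2$.

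For the ``$\ge$'' direction --- the heart of the lemma --- I would use Cauchy--Schwarz in the Frobenius inner product. Let $\Theta$ be any feasible point. Because $\text{range}(P) = \text{range}(X) \subseteq \text{range}(\Theta) = \text{range}(\Theta^{1/2})$, the matrix $\Theta^{1/2}(\Theta^{1/2})^\dagger$ is the orthogonal projector onto a subspace containing every column of $P$, so $\Theta^{1/2}(\Theta^{1/2})^\dagger P = P$. Writing $\Theta^{\dagger/2} := (\Theta^{1/2})^\dagger = (\Theta^\dagger)^{1/2}$ and taking traces,
\[ \Norm{X}_* = \text{tr}(P) = \text{tr}\!\big(\Theta^{1/2}\,\Theta^{\dagger/2}P\big) = \innerproduct{\Theta^{1/2}}{\Theta^{\dagger/2}P}_F \le \Norm{\Theta^{1/2}}_F\,\Norm{\Theta^{\dagger/2}P}_F. \]
Now $\Norm{\Theta^{1/2}}_F^2 = \text{tr}(\Theta) = 1$, while $\Norm{\Theta^{\dagger/2}P}_F^2 = \text{tr}(P\Theta^\dagger P) = \text{tr}(\Theta^\dagger P^2) = \text{tr}(\Theta^\dagger XX^T) = \innerproduct{\Theta^\dagger X}{X}$, so squaring gives $\Norm{X}_*^2 \le \innerproduct{\Theta^\dagger X}{X}$ for every feasible $\Theta$. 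Together with the previous paragraph this yields the claimed identity, with minimizer $\bar\Theta$. For uniqueness, equality in Cauchy--Schwarz forces $\Theta^{\dagger/2}P = c\,\Theta^{1/2}$ for a scalar $c$; left-multiplying by $\Theta^{1/2}$ and using the projector identity gives $P = c\,\Theta$, and then $\text{tr}(\Theta) = 1$ forces $c = \text{tr}(P)$, i.e.\ $\Theta = \bar\Theta$.

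The main obstacle is the pseudo-inverse bookkeeping in the ``$\ge$'' step: one must justify, for PSD matrices, the identities $(\Theta^{1/2})^\dagger = (\Theta^\dagger)^{1/2}$ and $\Theta^{\dagger/2}\Theta^{\dagger/2} = \Theta^\dagger$, and, above all, the range fact that makes $\Theta^{1/2}(\Theta^{1/2})^\dagger$ fix $P$ --- this is precisely where the constraint $\text{range}(X) \subseteq \text{range}(\Theta)$ enters and where a careless argument would fail. All of this is routine once one diagonalizes $\Theta$ and works on $\text{range}(\Theta)$, so the rest of the proof is short.
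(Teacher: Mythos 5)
The paper does not prove this lemma itself --- it states it and delegates the proof to the cited reference (Argyriou, Evgeniou and Pontil), so there is no in-paper argument to compare against. Your proof is correct and is essentially the standard one from that reference: feasibility and evaluation at $\bar{\Theta} = \sqrt{XX^T}/\text{tr}(\sqrt{XX^T})$ give the upper bound, and the Frobenius Cauchy--Schwarz inequality applied to $\Theta^{1/2}$ and $(\Theta^{1/2})^\dagger\sqrt{XX^T}$, together with the range condition that makes the projector $\Theta^{1/2}(\Theta^{1/2})^\dagger$ fix $\sqrt{XX^T}$, gives the matching lower bound. The pseudo-inverse identities you flag ($(\Theta^{1/2})^\dagger = (\Theta^\dagger)^{1/2}$, $P^\dagger P^2 = P$) are indeed the only points requiring care, and your uniqueness argument via the equality case of Cauchy--Schwarz is a correct bonus beyond what the lemma claims.
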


Using the above lemma, we can write \eqref{eqn:primal_problem} as 
\begin{alignat}{3}\label{eqn:primal_rewritten}
\underset{\substack{\Theta_k \in \mathcal{P}^{n_k}, \W^{(k)}\\ k \in \{1,\cdots,K\}}}{\min}& &&C\Norm{\W_\Omega-\Y_\Omega}^2 + \sum_{k=1}^K \frac{1}{2\lambda_k} \innerproduct{\Theta_k^\dagger W^{(k)}_k}{W^{(k)}_k}  \nonumber\\
\text{subject to}& &&\hspace{1.5cm}\W \ge 0.
\end{alignat}

The following theorem provides the dual framework for the nonnegative low-rank tensor completion problem. It is a direct generalization of Theorem 1 in \cite{dual} to the case with nonnegative constraints.
\begin{theorem}
An equivalent partial dual formulation of the problem \eqref{eqn:primal_rewritten} is 
\begin{alignat}{3}\label{eqn:minimax_problem}
\underset{\substack{\Theta_k\in \mathcal{P}^{n_k},\\ k \in \{1,\ldots,K\}}}{\text{min}} &\;\underset{\substack{\Z\in\mathcal{C}\\ \s\in\Rptensor}}{\text{max}}\; \innerproduct{\Z}{\Y_\Omega} - \dfrac{1}{4C}\|\Z\|^2
\nonumber\\ 
&-\sum_{k=1}^K \frac{\lambda_k}{2} \innerproduct{(Z_k+S_k)}{\Theta_k(Z_k+S_k)},
\end{alignat} where $\mathcal{C} = \{\Z\in\Rtensor:\,\Z = \Z_\Omega\}$.
$\Z$ is the dual tensor variable corresponding to the primal problem \eqref{eqn:primal_rewritten}, $\s$ is the dual tensor variable corresponding to the nonnegative constraints.
\end{theorem}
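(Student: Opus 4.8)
The plan is to form a \emph{partial} Lagrangian dual of \eqref{eqn:primal_rewritten}, dualizing only the factors $\W^{(k)}$ together with the constraint $\W \ge 0$ while keeping the matrices $\Theta_k$ as parameters of an outer minimization — this is exactly why $\min_{\Theta_k}$ survives on the outside of \eqref{eqn:minimax_problem} and why the dual is called ``partial''. First I would rewrite the data term through its Fenchel conjugate,
\begin{equation*}
C\Norm{\W_\Omega-\Y_\Omega}^2 \;=\; \max_{\Z}\; \innerproduct{\Z}{\Y_\Omega-\W_\Omega} - \tfrac{1}{4C}\Norm{\Z}^2,
\end{equation*}
using that the conjugate of $C\Norm{\cdot}^2$ is $\tfrac{1}{4C}\Norm{\cdot}^2$ and that this expression is even in $\Z$; since $\W_\Omega-\Y_\Omega$ is supported on $\Omega$, the optimal $\Z$ is also supported on $\Omega$, so $\Z$ may be restricted to $\mathcal{C}$. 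Next I would attach a multiplier $\s\in\Rptensor$ to the constraint $\W\ge 0$, contributing the term $-\innerproduct{\s}{\W}$, exactly as in the nonnegative matrix case of \cite{structured_matrix_completion}. After these substitutions, \eqref{eqn:primal_rewritten} becomes $\min_{\Theta_k}\,\min_{\W^{(k)}}\,\max_{\Z\in\mathcal{C},\,\s\ge 0}$ of a single scalar function.

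For each fixed $\Theta_k\succeq 0$ this inner problem is convex in the $\W^{(k)}$ (the matrix-fractional term $\innerproduct{\Theta_k^\dagger W_k^{(k)}}{W_k^{(k)}}$ is convex on the subspace $\text{range}(W_k^{(k)})\subseteq\text{range}(\Theta_k)$ carried over from Lemma~\ref{lemma:nuclear_norm}) and jointly concave in $(\Z,\s)$, and it has a strictly feasible point (any strictly positive tensor), so Slater's condition lets me interchange $\min_{\W^{(k)}}$ with $\max_{\Z,\s}$. It then remains to minimize over the $\W^{(k)}$ in closed form. Using $\innerproduct{\Z}{\W_\Omega}=\innerproduct{\Z}{\W}$ for $\Z\in\mathcal{C}$, the splitting $\W=\sum_k\W^{(k)}$, and the fact that mode-$k$ unfolding preserves inner products ($\innerproduct{\Z}{\W^{(k)}}=\innerproduct{Z_k}{W_k^{(k)}}$ and $\innerproduct{\s}{\W^{(k)}}=\innerproduct{S_k}{W_k^{(k)}}$), the $\W^{(k)}$-dependent part of the objective decouples across $k$ into $\sum_{k}\big(-\innerproduct{Z_k+S_k}{W_k^{(k)}}+\tfrac{1}{2\lambda_k}\innerproduct{\Theta_k^\dagger W_k^{(k)}}{W_k^{(k)}}\big)$. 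Each summand is minimized by the elementary identity $\min_{\text{range}(W)\subseteq\text{range}(\Theta)}\{-\innerproduct{A}{W}+\tfrac{1}{2\lambda}\innerproduct{\Theta^\dagger W}{W}\}=-\tfrac{\lambda}{2}\innerproduct{A}{\Theta A}$, attained at $W=\lambda\Theta A$, applied with $A=Z_k+S_k$; this produces the terms $-\sum_k\tfrac{\lambda_k}{2}\innerproduct{Z_k+S_k}{\Theta_k(Z_k+S_k)}$, while $\innerproduct{\Z}{\Y_\Omega}-\tfrac{1}{4C}\Norm{\Z}^2$ survives untouched. Reinstating the outer $\min_{\Theta_k}$ gives \eqref{eqn:minimax_problem}, and since every step preserves optimal values this is the claimed equivalence.

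The hard part is making the min--max interchange rigorous when none of the feasible sets is compact: I would argue it through Slater's condition and the joint convexity of the matrix-fractional function at fixed $\Theta_k$ rather than invoking Sion's theorem directly. One must also keep the range constraints $\text{range}(W_k^{(k)})\subseteq\text{range}(\Theta_k)$ from Lemma~\ref{lemma:nuclear_norm} present at every stage, since, together with $\Theta_k^\dagger$ being positive definite on $\text{range}(\Theta_k)$, they are what keeps the inner $\W^{(k)}$-minimization bounded below and are consistent with the closed-form minimizer $W_k^{(k)}=\lambda_k\Theta_k(Z_k+S_k)\in\text{range}(\Theta_k)$. Everything else is routine bookkeeping; relative to Theorem~1 of \cite{dual}, the only genuinely new element is the multiplier $\s$ and the resulting $S_k$ contributions, so the argument should follow that proof almost verbatim.
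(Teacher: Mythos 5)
Your proposal is correct and follows essentially the same route as the paper: a partial Lagrangian dual over the $\W^{(k)}$ and the nonnegativity constraint with $\Theta_k$ kept in an outer minimization, a multiplier $\s\ge 0$ for $\W\ge 0$, and closed-form elimination of the $\W^{(k)}$ via $W_k^{(k)}=\lambda_k\Theta_k(Z_k+S_k)$, which yields exactly the $-\tfrac{\lambda_k}{2}\innerproduct{Z_k+S_k}{\Theta_k(Z_k+S_k)}$ terms. The only difference is bookkeeping: the paper introduces auxiliary variables $U_k=W_k^{(k)}$ with multipliers $\Lambda_k$ and recovers $\Z=2C(\Y_\Omega-\W_\Omega)$ and $\fold{k}(\Lambda_k)=\Z+\s$ from KKT stationarity, whereas you introduce $\Z$ directly through the Fenchel biconjugate of the quadratic loss; the resulting algebra is identical, and your explicit attention to the range constraints and the min--max interchange is, if anything, more careful than the paper's.
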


\begin{proof}
Consider the inner problem of \eqref{eqn:primal_rewritten} over $\W^{(k)}$.
% \begin{alignat}{3}
% \underset{\substack{W^{(k)},\\ k \in \{1,\cdots,K\}}}{\text{min}}& &&C\Norm{\bigg(\sum_{k=1}^K W^{(k)}\bigg)_\Omega-Y_\Omega}^2 + \sum_{k=1}^K \frac{1}{2\lambda_k} \innerproduct{\Theta_k^\dagger W^{(k)}_k}{W^{(k)}_k}  \nonumber\\
% \text{subject to}&\hspace{0.5cm} &&A(W) \ge 0,
% \end{alignat}
% 
We introduce auxiliary variables $U_k$ with the associated constraints $U_k = W_k^{(k)}$. The Lagrangian of this problem will be
\begin{align}\label{eqn:lagrangian}
&\mathcal{L}(\W^{(1)},\ldots,\W^{(K)},U_1,\ldots,U_K, \Lambda_1,\ldots, \Lambda_K, \s) = \nonumber\\
&C\Norm{\bigg(\sum_{k=1}^K \W^{(k)}\bigg)_\Omega-\Y_\Omega}^2 + \sum_{k=1}^K \frac{1}{2\lambda_k} \innerproduct{\Theta_k^\dagger U_k}{ U_k} \nonumber\\
&+\sum_{k=1}^K \innerproduct{\Lambda_k}{W^{(k)}_k - U_k} - \innerproduct{\s}{\W}
\end{align} 
The dual function of the above will be given by 
\begin{alignat}{3}\label{eqn:dual_function}
\mathcal{Q}(\Theta_1,\ldots,\Theta_K,\Lambda_1,\ldots \Lambda_K,\s) = \underset{\substack{U_k, \W^{(k)}\\ k \in \{1,\ldots,K\}}}{\text{min}} \mathcal{L}
\end{alignat}
Applying the first-order KKT conditions, we get the following equations:
% Taking the derivative of the Lagrangian with respect to $W^{(k)}$ and equating to 0, we get 
% \begin{align}
%     \frac{\partial \mathcal{L}}{\partial W^{(k)}} = 0
%     &\implies 2C\bigg[\bigg(\sum_{k=1}^K W^{(k)}\bigg)_\Omega-Y_\Omega\bigg] + \fold{k} (\Lambda_k) + A^*(s) = 0 \nonumber\\
%     &\implies \fold{k} (\Lambda_k) = 2C\bigg[Y_\Omega-\bigg(\sum_{k=1}^K W^{(k)}\bigg)_\Omega\bigg] + A^{*}(s)\label{eqn:lambda_k}
% \end{align} 
% where we have used the property of the adjoint of $A$. The right-hand side of \eqref{eqn:lambda_k} is independent of $k$, so $\fold{k}(\Lambda_k)$ is also independent of $k$. 
% Then we have,
\begin{subequations}
\begin{align}
\fold{k}(\Lambda_k) &= \Z + \s,\label{eqn:lambda_Z}\\    
U_k &= \lambda_k\Theta_k \Lambda_k.\label{eqn:U_k}
\end{align}
\end{subequations}
where $\Z/(2C) = \Y_\Omega-\Big(\sum_{k=1}^K \W^{(k)}\Big)_\Omega$.
% 
% Next, we minimize w.r.t. $U_k$, to get 
It can seen from the definition of $\Z$ that $\Z = \Z_\Omega$.

Using \eqref{eqn:lambda_Z} and \eqref{eqn:U_k}, we compute each term of \eqref{eqn:lagrangian} to be
\begin{equation*}
C\Norm{\bigg(\sum_{k=1}^K \W^{(k)}\bigg)_\Omega-\Y_\Omega}^2 = C\bigg(\dfrac{\|\Z\|^2}{4C^2}\bigg) = \dfrac{1}{4C}\|\Z\|^2,
\end{equation*}
\begin{align*}
\sum_{k=1}^K \frac{1}{2\lambda_k} &\innerproduct{\Theta_k^\dagger U_k}{U_k} - \innerproduct{\Lambda_k}{U_k} \\
&\hspace{1cm}= -\sum_{k=1}^K \frac{\lambda_k}{2} \innerproduct{(Z_k+S_k)}{\Theta_k(Z_k+S_k)},
\end{align*}
\begin{align*}
\sum_{k=1}^K \innerproduct{\Lambda_k}{W^{(k)}_k} - \innerproduct{\s}{\W} = % \sum_{k=1}^K \innerproduct{\fold{k}(\Lambda_k)}{W^{(k)}} - \innerproduct{A^*(s)}{W} \nonumber\\
% = \innerproduct{Z+A^*(s)}{\sum_{k=1}^K W^{(k)}} - \innerproduct{A^*(s)}{W}\nonumber\\
% &= \innerproduct{Z+A^*(s)}{W} - \innerproduct{A^*(s)}{W} = \innerproduct{Z}{W} \overset{(*)}{=} \innerproduct{Z}{W_\Omega}\nonumber\\
% &= \bigg\langle Z,\bigg(\sum_{k=1}^K W^{(k)}\bigg)_\Omega\bigg\rangle \nonumber \\
% &= \innerproduct{Z}{Y_\Omega-(1/2C)Z} 
\innerproduct{\Z}{\Y_\Omega} - \dfrac{1}{2C}\|\Z\|^2.
\end{align*}
% where $(*)$ is obtained by noting that $Z = Z_\Omega$ by definition in \eqref{eqn:lambda_Z}. 
Summing the terms, we obtain the expression for the dual function as
\begin{equation}
\mathcal{Q} = \innerproduct{\Z}{\Y_\Omega} - \dfrac{\|\Z\|^2}{4C} -\sum_{k=1}^K \frac{\lambda_k}{2} \innerproduct{(Z_k+S_k)}{\Theta_k(Z_k+S_k)}.\nonumber
\end{equation}
This gives the minimax problem \eqref{eqn:minimax_problem}. From \eqref{eqn:lambda_Z} and \eqref{eqn:U_k}, we can deduce the relation between optimal points of primal and minimax problems.
\end{proof}

If $\{\bar{\Theta}_1,\ldots,\bar{\Theta}_K,\bar{\Z},\bar{\s}\}$ is the optimal solution of \eqref{eqn:minimax_problem}, then the reconstructed tensor is given by $\bar{\W} = \sum_{k=1}^K \bar{\W}^{(k)}$ where $\bar{\W}^{(k)} = \lambda_k(\bar{\Z}+\bar{\s})\times_k\bar{\Theta}_k$ for all $k$. This factorization gives us a decoupling of the low-rank and nonnegative constraints enforced on $\W$ in \eqref{eqn:primal_rewritten} - the low-rank constraint is enforced by $\Theta_k$, the nonnegative constraints are encoded in $\s$, and $\Z$ corresponds to the dual variables of the primal problem.

% \begin{corollary}[Representer theorem]
% The optimal solution of the primal problem \eqref{eqn:primal_rewritten} admits a representation of the form: $\bar{\W}^{(k)} = \lambda_k(\bar{\Z}+\bar{\s}) \times_k \bar{\Theta}_k$ for all $k$, where $\Z = \Z_\Omega$ and $\Theta_k \in P^{n_k}$.
% \end{corollary}

% \begin{remark}
% Explain the significance of the above corollary.
% \end{remark}

\section{Proposed Algorithm}

% Instead of solving the minimax problem \eqref{eqn:minimax_problem}, we note the low-rank property of each $\Theta_k$ induced due to low-rank constraint on $\W$. A fixed-rank paramterized problem can be given as follows
Since $\Theta_k \in \mathcal{P}^{n_k}$ we can enforce the rank constraint explicitly by factorizing $\Theta_k$ as $\Theta_k = U_k U_k^T$, $U_k \in \mathcal{S}^{n_k}_{r_k}$, where $\mathcal{S}^n_r = \{U\in\R^{n\times r}:\|U\|_F = 1\}$. We rewrite \eqref{eqn:minimax_problem} as
\begin{equation}\label{eqn:outer_manifold_problem_nn}
\underset{U\in \mathcal{S}^{n_1}_{r_1}\times\cdots\times \mathcal{S}^{n_K}_{r_K}}{\min}\;g(U),
\end{equation}
where $U = (U_1,\ldots, U_K)$, and $g(U)$ is the optimal value of the problem
\begin{align}\label{eqn:inner_strong_convex_problem_nn}
g(U) = &\underset{\substack{\Z\in\mathcal{C}\\\s\in\Rptensor}}{\max} \innerproduct{\Z}{\Y_\Omega} - \dfrac{\|\Z\|^2}{4C} \nonumber\\
&-\sum_{k=1}^K \frac{\lambda_k}{2} \Norm{U_k^T(Z_k+S_k)}^2.
\end{align}

\subsection{Convex Optimization Problem}
The optimization problem \eqref{eqn:inner_strong_convex_problem_nn} is a convex optimization problem over the variables $\Z$ and $\s$, for a given $U$, hence it has a unique solution.
% The optimization problem \eqref{eqn:inner_strong_convex_problem_nn} has a unique solution for a given $U$ since the problem is convex with respect to the variables $\Z$ and $\s$. Hence $g(U)$ is well defined. 

% The nonnegative Tensor completion problem is the problem 
% \begin{alignat}{3}
% \underset{W\in\Rtensor}{\text{min}}& &&C\|W_\Omega - Y_\Omega\|^2 + \sum_{k=1}^K \dfrac{1}{\lambda_k}\Norm{W_k^{(k)}}^2_* \nonumber\\
% \text{subject to}&\hspace{0.5cm} &&W \ge 0. \label{eqn:primal_problem_nn}
% \end{alignat}
% % 
% The fixed-rank dual problem of \eqref{eqn:primal_problem_nn} is
% \begin{equation}\label{eqn:outer_manifold_problem_nn}
% \underset{U\in \mathcal{S}^{n_1}_{r_1}\times\ldots\times \mathcal{S}^{n_K}_{r_K}}{min}\;f(U),
% \end{equation}
% where $U = (U_1,\ldots,U_K)$ and 
% \begin{equation}\label{eqn:inner_strong_convex_problem_nn}
% f(U) = \underset{S\in\R_+^{n_1\times\ldots\times n_K}}{max}\bigg(\underset{Z\in\mathcal{C}}{max}\; \innerproduct{Z}{Y_\Omega} - \dfrac{1}{4C}\|Z\|^2 -\sum_{k=1}^K \frac{\lambda_k}{2} \Norm{U_k^TZ_k + U_k^TS_k}^2\bigg).
% \end{equation}

% We solve the problem \eqref{eqn:inner_strong_convex_problem_nn} separately for $Z$ and $S$. Define
% \begin{equation*}
% g(Z) = \innerproduct{Z}{Y_\Omega} - \dfrac{1}{4C}\|Z\|^2 -\sum_{k=1}^K \frac{\lambda_k}{2} \Norm{U_k^TZ_k + U_k^TS_k}^2.
% \end{equation*}
The problem \eqref{eqn:inner_strong_convex_problem_nn}  is solved separately for $\Z$ and $\s$ using an alternating minimization method. Equating the gradient of objective  with respect to $\Z$ to zero, we get
\begin{align}
\dfrac{\Z_\Omega}{2C} + \sum_{k=1}^K \lambda_k (\Z\times_k& U_kU_k^T)_\Omega =\Y_\Omega \nonumber\\ 
&- \sum_{k=1}^K \lambda_k (\s \times_k U_kU_k^T)_\Omega. \label{eqn:linear_system_Z}
\end{align}
This is a sparse linear system in $\Z$, which can be solved using linear conjugate gradient method. For various preconditioned CG approaches, see \cite{saad2003,benzi2002,das2020,das2021,katyan2020,mehta2020,kumar2014,kumar2016,kumar2013,kumar2013b,rampalli2018,kumar2010c,aggarwal2019,kumar2015b,kumar2010d,kumar2015e,niu2010,kumar2011c,kumar2014s}.

Problem \eqref{eqn:inner_strong_convex_problem_nn} has only one term involving $\s$. Hence, the optimization problem over $\s$ reduces to
\begin{align}
\underset{\s \in\Rptensor}{\min}  \sum_{k=1}^K \frac{\lambda_k}{2} \Norm{U_k^TZ_k + U_k^TS_k}^2. \label{eqn:NNLS_S}
\end{align}
 This is a nonnegative least squares (NNLS) problem. We use the method detailed in \cite{nnls_sra}, modified to suit our objective. 

\subsection{Riemannian Optimization Problem}

Given the optimizer $(\hat{\Z},\hat{\s})$ of \eqref{eqn:inner_strong_convex_problem_nn}, we compute $g$ at a point $U$ as 
\begin{align}
\label{eqn:direct_cost}
g(U) = \innerproduct{\hat{\Z}}{\Y_\Omega} - \dfrac{\|\hat{\Z}\|^2}{4C} -\sum_{k=1}^K \frac{\lambda_k}{2} \Norm{U_k^T(\hat{Z}_k+\hat{S}_k)}^2.
\end{align}
The set $\mathcal{S}^n_r$ is a Riemannian manifold, known as the spectrahedron manifold. The constraint set $\mathcal{S}^{n_1}_{r_1}\times\cdots\times \mathcal{S}^{n_K}_{r_K}$, therefore forms a product manifold and problem \eqref{eqn:outer_manifold_problem_nn} is an optimization problem on a manifold. 

To develop optimization algorithms on manifolds \cite{han2023,naram2022}, we need a few geometric tools. We delegate development of the specific tools to \cite{absil_spectrahedron} and \cite{dual}. For an introduction to optimization on general manifolds, we refer \cite{absil_book} and \cite{boumal_book}. 

For our case, the Euclidean gradient for $g$ can be computed as 
\begin{equation*}
    \nabla g(U) = - (\lambda_1 A_1, \ldots, \lambda_K A_K),
\end{equation*}
where $A_k = (\hat{Z_k} + \hat{S_k})(\hat{Z_k}+\hat{S_k})^T U_k$, for $1\le k\le K$. We use a generalization of non-linear conjugate gradient algorithm to Riemannian manifolds  \cite{riemannian_cg} to solve problem \eqref{eqn:outer_manifold_problem_nn}.

The proposed algorithm is detailed in Algorithm \ref{alg:rcg}. The reconstructed tensor is given by
\begin{equation*}
    \hat{\W} = \sum_{k=1}^{K} \lambda_k(\hat{\Z}+\hat{\s}) \times_k (U_k U_k^T).
\end{equation*}

\begin{algorithm}[H]
\footnotesize
	\centering
	\caption{Proposed Algorithm for Nonnegative Tensor Completion}\label{alg:rcg}
	\begin{algorithmic}[1]
		\Require $\Y_\Omega$, rank=$(r_1,\ldots,r_K)$, $\tau$, $(\lambda_1,\ldots,\lambda_K)$  \Comment{Input parameters}
		\For{$t = 1,2,\cdots$} 
		\State Check Termination: if $\|\nabla g(U^{(t)})\| \le \tau$ then break
        \State Compute $\hat{\Z}^{(t)}$ in \eqref{eqn:linear_system_Z} using conjugate gradient algorithm
		\State Compute $\hat{\s}^{(t)}$ in \eqref{eqn:NNLS_S} using NNLS solver
		\State Compute cost $g(U^{(t)})$ and gradient $\nabla g(U^{(t)})$
		\State Update $U$: $U^{(t+1)}$ = {\tt RiemannianCG-update}$(U^{(t)})$
		\EndFor
	    \State \textbf{Output:} $\hat{\W} = \sum_{k=1}^{K} \lambda_k(\hat{\Z}+\hat{\s}) \times_k (U_k U_k^T)$
	\end{algorithmic}
\end{algorithm}
\normalsize

\subsection{Complexity}
\begin{itemize}
    \item Step 3 (Computing $\hat{\Z}$): We use the linear conjugate gradient algorithm to solve the linear system \eqref{eqn:linear_system_Z}. The major cost in this step is to compute the matrix products $U_k^T Z_k$ and $U_k^T S_k$, for $k \in \{1,\ldots,K\}$. We can exploit the sparse structure of the problem to compute the products in $O(|\Omega|r_k)$ steps, and hence, if the linear solver takes $T_{cg}$ iterations, the total cost of this step is $O\left (\sum_{k=1}^K T_{cg}|\Omega|r_k \right)$.
    \item Step 4 (Computing $\hat{\s}$): For each iteration of the NNLS algorithm, we need to compute the cost function in \eqref{eqn:NNLS_S} and its gradient with respect to $\s$. Both of these operations can be computed in a similar manner as done for $\Z$, and the total cost of this step is $O\left(\sum_{k=1}^K T_{nnls}|\Omega|r_k\right)$, where $T_{nnls}$ is the number of iterations of NNLS algorithm.
    \item Step 5 (Computing cost and gradient): We can compute $g(U)$ from \eqref{eqn:direct_cost} given $\hat{\Z}$ and $\hat{\s}$ computed in previous steps. This can be done in $O(K|\Omega|)$. The gradient requires computing the matrix products $(Z_k+S_k)(Z_k+S_k)^T U_k$, and we can do this in $O(|\Omega|r_k)$. Hence, total cost for computing the gradient is $O(\sum_{k=1}^{K}|\Omega|r_k)$.
    \item Step 6 (Riemannian Conjugate Gradient): Search direction and step length are computed in this step. Then the current solution $U^{(t)}$ is updated to $U^{(t+1)}$ by performing retraction at the $U^{(t)}$ along the search direction. This step ensures that the update remains on the product manifold. These operations can be done in $O(\sum_{k=1}^K n_k r_k^2+\sum_{k=1}^K r_k^3)$.
\end{itemize}
Therefore, the overall per-iteration complexity of the proposed algorithm is 
\begin{equation*}
O\bigg((T_{cg}+T_{nnls})|\Omega|\sum_{k=1}^K{r_k}+\sum_{k=1}^K n_k r_k^2+\sum_{k=1}^K r_k^3\bigg).    
\end{equation*}

We store all the tensors in the sparse format, and perform operations accordingly. Hence, the overall space complexity of the proposed algorithm is 
\begin{equation*}
O\bigg(|\Omega|+\sum_{k=1}^K n_k r_k\bigg).    
\end{equation*}

\section{Numerical Experiments}

\subsection{Experimental setup}

We have performed experiments on several publicly available datasets (see Table \ref{tab:description_datasets}). We compare the performance of our algorithm to other state-of-the-art tensor completion algorithms. The baseline algorithms used for comparison are given below. Note that, with the exception of NCPC, all the other baseline algorithms do not enforce non-negativity in the completed tensors. 

\begin{table}
\centering
\begin{tabular}[H]{c|c|c}
\toprule
\multicolumn{1}{c}{Type} & \multicolumn{1}{c}{Dataset} & \multicolumn{1}{c}{Dimensions}\\
\midrule
	Hyperspectral & {\tt Ribeira} & $203\times 268\times 33$  \\
	Hyperspectral & {\tt Braga}   & $203\times 268\times 33$  \\
	Hyperspectral & {\tt Ruivaes} & $203\times 268\times 33$  \\
	Video         & {\tt Tomato}  & $320\times 242\times 167$ \\
% 	Video         & {\tt Ocean}   & $160\times 112\times 32$  \\
	Video         &{\tt Container}& $144\times 176\times 150$ \\
	Video         & {\tt Hall}    & $144\times 176\times 150$ \\
	Video         & {\tt Highway} & $144\times 176\times 150$ \\
	Color Image   & {\tt Baboon}  & $256\times 256\times 3$   \\
	Color Image   & {\tt Splash}  & $512\times 512\times 3$   \\
\bottomrule
\end{tabular}
\caption{Description of datasets.}
\label{tab:description_datasets}
\end{table}

\begin{enumerate}
    \item {\tt Dual} \cite{dual}: A dual framework for low-rank tensor completion using a variant of the latent trace norm regularizer.
    \item {\tt RPrecon} \cite{RPrecon}: A low-rank tensor completion algorithm with a multi-linear rank constraint using Riemannian preconditioning.
    \item {\tt geomCG} \cite{geomCG}: An algorithm for tensor completion using optimization on the manifold of fixed multi-linear rank tensors.
    \item {\tt NCPC} \cite{ncpc}: A nonnegative tensor completion method using the CP decomposition.
    \item {\tt TMac} \cite{tmac}: An alternating minimization algorithm that uses parallel matrix factorization.
    \item {\tt LRTC-TV} \cite{lrtc_tv}: An ADMM based algorithm that uses total variation regularization to enforce smoothness.
    \item {\tt SMF-LRTC} \cite{smf_lrtc}: An algorithm that enforces smoothness constraint on factor matrices. 
    \item {\tt FFW} \cite{ffw_lrtc}: An algorithm with scaled latent nuclear norm using the Frank-Wolfe algorithm.
\end{enumerate}
We randomly sample 10\% of the tensor entries and use it as training data. The metric we use for evaluation is the {\tt RMSE} between the reconstructed and original tensors
\begin{equation*}
    {\tt RMSE} = \sqrt{\dfrac{\|\W-\W_{true}\|_F^2}{n_1n_2n_3}}.
\end{equation*}
The proposed method is implemented based on {\tt Dual} code. It uses {\tt MANOPT} library \cite{manopt} for implementing outer problem \eqref{eqn:outer_manifold_problem_nn} on manifolds. For the nonnegative least squares problem in \eqref{eqn:inner_strong_convex_problem_nn}, we use the {\tt NNLS} code \cite{nnls_sra} modified to work with our objective.

\begin{figure}[t]	
	\centering			
	\subfloat[RMSE v/s Iters]{\includegraphics[width=0.5\linewidth]{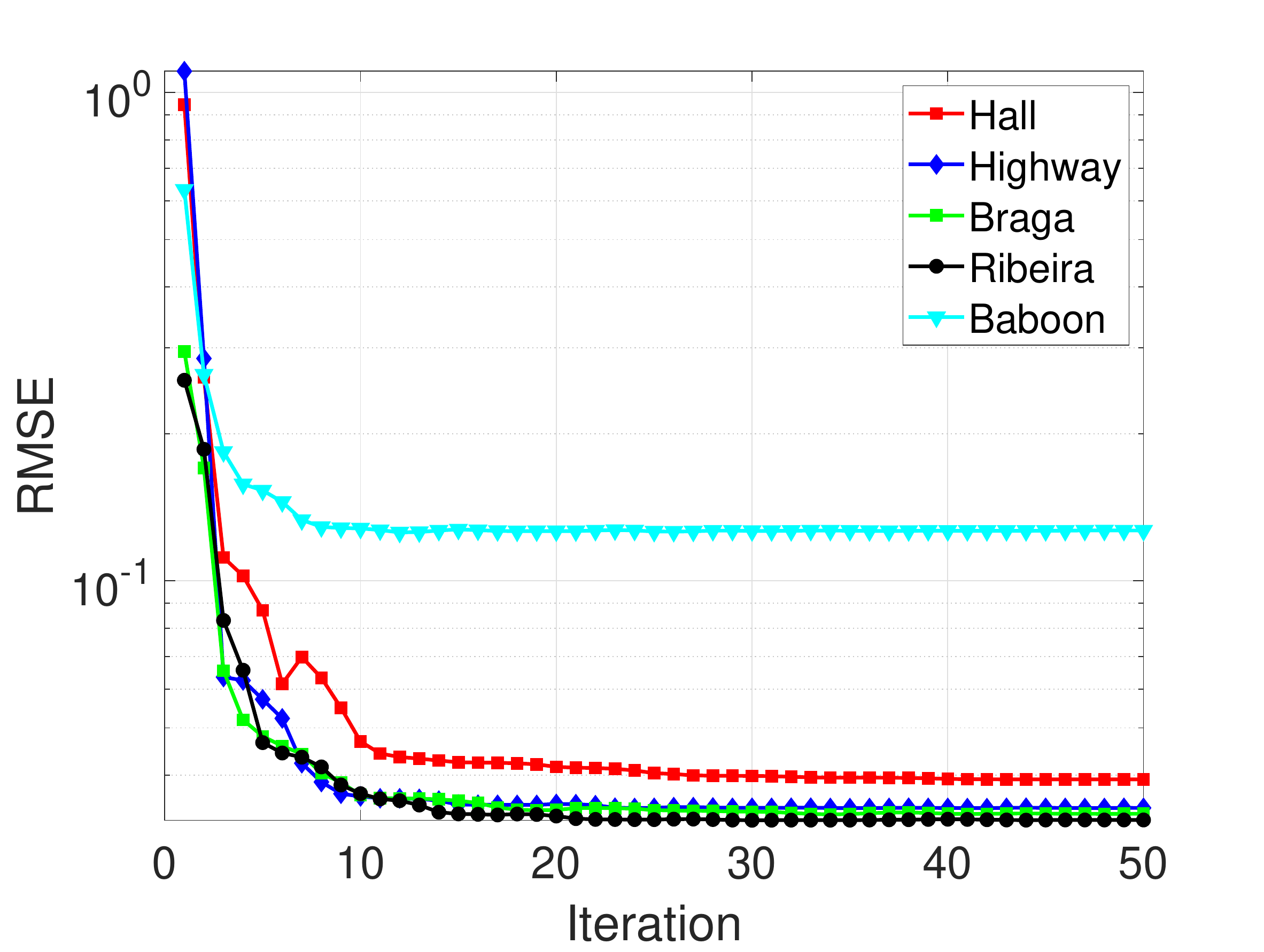}}%
	\hfil
	\subfloat[RMSE v/s Rank]{\includegraphics[width=0.5\linewidth]{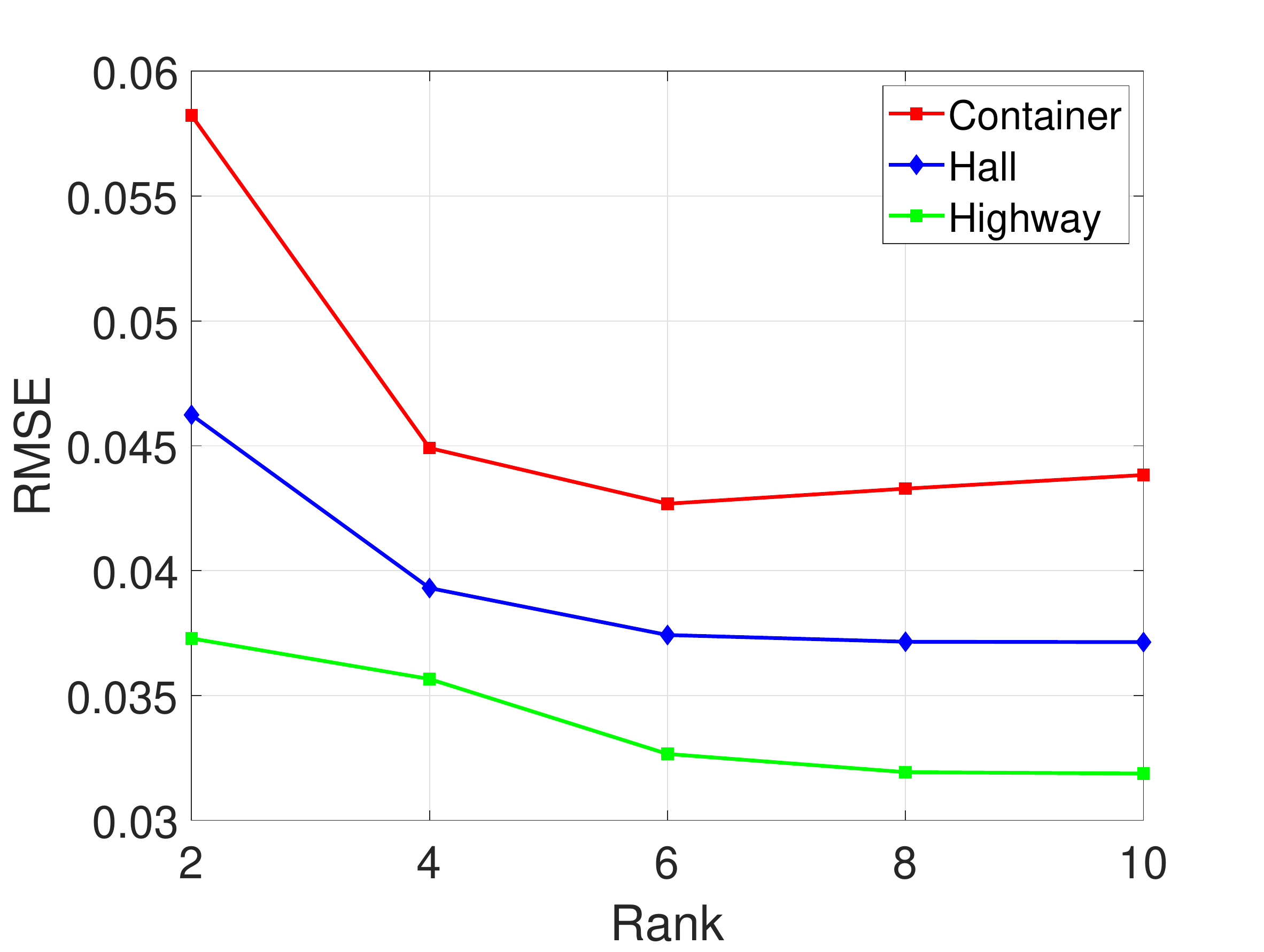}}%
	\hfil
	\caption{Variation of RMSE with iterations and rank. In the iterations plot, RMSE is in log scale. In the rank plot, the rank is taken to be value of X-label times $[1,1,1]$.}
    \label{fig:ablation_plots}
\end{figure}

\begin{figure}[t]	
	\centering			
	\subfloat{\includegraphics[width=0.18\linewidth]{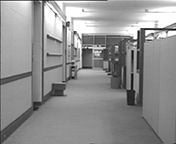}}%
	\hfil
	\subfloat{\includegraphics[width=0.18\linewidth]{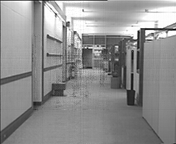}}%
	\hfil	
	\subfloat{\includegraphics[width=0.18\linewidth]{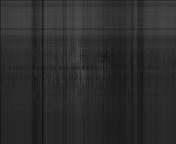}}%
	\hfil
	\subfloat{\includegraphics[width=0.18\linewidth]{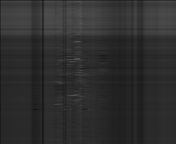}}%
	\hfil					
	\subfloat{\includegraphics[width=0.18\linewidth]{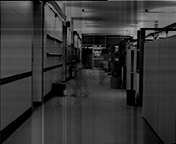}}%
	\hfil
    \setcounter{subfigure}{0}
	\subfloat[$\W_{real}$]{\includegraphics[width=0.18\linewidth]{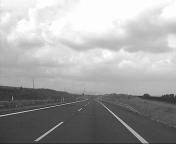}}%
	\hfil
	\subfloat[$\W$]{\includegraphics[width=0.18\linewidth]{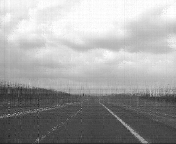}}%
	\hfil	
	\subfloat[$\W^{(1)}$]{\includegraphics[width=0.18\linewidth]{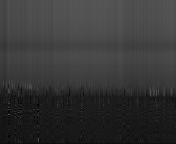}}%
	\hfil
	\subfloat[$\W^{(2)}$]{\includegraphics[width=0.18\linewidth]{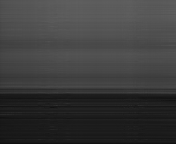}}%
	\hfil					
	\subfloat[$\W^{(3)}$]{\includegraphics[width=0.18\linewidth]{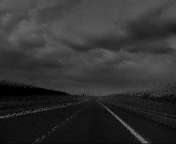}}%
	\hfil
	\caption{Original frame, reconstructed frame and components of reconstructed frame from Hall and Highway videos.}
	\label{fig:component_images}
\end{figure}

\begin{table*}[t]
\centering
\begin{tabular}{|c|c|c|c|c|c|c|c|c|c|c|c|}\hline
%  \multicolumn{1}{|c|}{Type} & \multicolumn{4}{c|}{Embedding based} & \multicolumn{3}{c|}{Other}  \\ \hline 
      Dataset  & Prop    & Dual   & RPrecon & geomCG  & NCPC    &  TMac   & LRTC-TV & SMF-LRTC&  FFW   \\ \hline 
      Ribeira  & \textbf{0.03090}  & \underline{0.03093} & 0.0454 & 0.06593 & 0.16465 & 0.1644  & 0.04984 & 0.04159 & 0.0696      \\ \hline
      Braga    & \textbf{0.02817}  & \underline{0.03054} & 0.0939 & 0.07348 & 0.07348 & 0.20226 & 0.20227 & 0.06445 & 0.06691     \\ \hline
      Ruivaes  & \textbf{0.029211} & 0.04969 & 0.0352  & 0.07146 & 0.14059 & 0.14073 & \underline{0.02955} & 0.040223 & 0.05437 \\ \hline
      Tomato   & \textbf{0.04282}  & \underline{0.04286} & 0.0589 & 0.05895 & 0.44175 & 0.44203 & 0.04638 & 0.053002& 0.11463     \\ \hline
      Container& \textbf{0.044908} & \underline{0.04693} & 0.0645 & 0.06452 & 0.5433  & 0.54413 & 0.09773 & 0.05894 & 0.15116     \\ \hline
      Hall     & \textbf{0.03696}  & \underline{0.03702} & 0.0687 & 0.06879 & 0.53629 & 0.53717 & 0.09409 & 0.06327 & 0.06327     \\ \hline
      Highway  & \textbf{0.03255}  & \underline{0.03652} & 0.0405 & 0.04055 & 0.6416  & 0.64261 & 0.04171 & 0.03723 & 0.11204     \\ \hline
    %   Ocean    & 0.07713 & 0.10293 & 0.07445 & 2.2994  & 0.4894  & 0.49114 & 0.08623 & \textbf{0.06620}  & \underline{0.07161} \\ \hline
      Baboon   & \underline{0.11943} & 0.33258 & 0.1563 & 2.7629  & 0.53315 & 0.5210 & \textbf{0.08729}   & 0.13456 & 0.14222     \\ \hline
      Splash   & \underline{0.06371} & 0.33475 & 0.3291 & 1.6897  & 0.50346 & 0.50014 & \textbf{0.05262}   & 0.09331 & 0.07779     \\ \hline
\end{tabular}
% \end{adjustbox}
\caption{RMSE of various methods. The best result among all methods is in bold and second best are underlined.}
\label{tab:rmse}
\end{table*}

\begin{figure*}[t]	
	\centering			
	\subfloat{\includegraphics[width=0.095\linewidth]{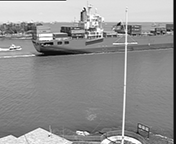}}%
	\hfil
	\subfloat{\includegraphics[width=0.095\linewidth]{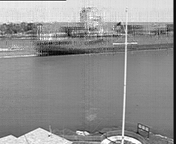}}%
	\hfil	
	\subfloat{\includegraphics[width=0.095\linewidth]{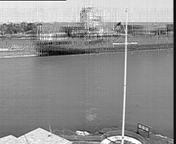}}%
	\hfil
	\subfloat{\includegraphics[width=0.095\linewidth]{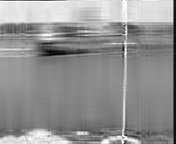}}%
	\hfil					
	\subfloat{\includegraphics[width=0.095\linewidth]{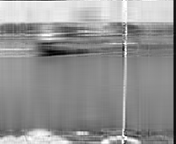}}%
	\hfil
	\subfloat{\includegraphics[width=0.095\linewidth]{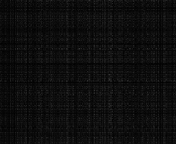}}%
	\hfil
	\subfloat{\includegraphics[width=0.095\linewidth]{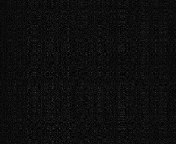}}%
	\hfil
	\subfloat{\includegraphics[width=0.095\linewidth]{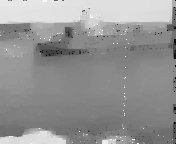}}%
	\hfil					
	\subfloat{\includegraphics[width=0.095\linewidth]{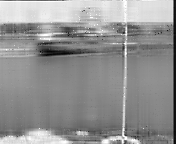}}%
	\hfil
	\subfloat{\includegraphics[width=0.095\linewidth]{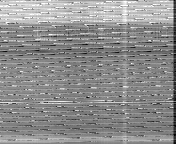}}%
	\hfil
	\setcounter{subfigure}{0}
	\subfloat{\includegraphics[width=0.095\linewidth]{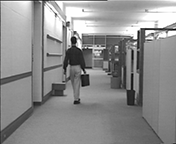}}%
	\hfil
	\subfloat{\includegraphics[width=0.095\linewidth]{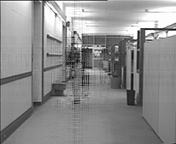}}%
	\hfil	
	\subfloat{\includegraphics[width=0.095\linewidth]{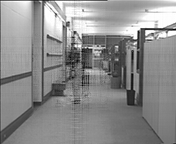}}%
	\hfil
	\subfloat{\includegraphics[width=0.095\linewidth]{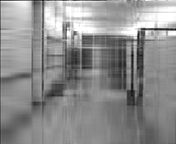}}%
	\hfil					
	\subfloat{\includegraphics[width=0.095\linewidth]{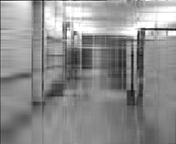}}%
	\hfil
	\subfloat{\includegraphics[width=0.095\linewidth]{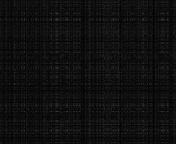}}%
	\hfil
	\subfloat{\includegraphics[width=0.095\linewidth]{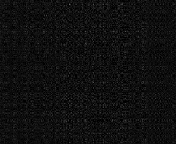}}%
	\hfil
	\subfloat{\includegraphics[width=0.095\linewidth]{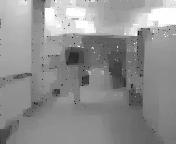}}%
	\hfil					
	\subfloat{\includegraphics[width=0.095\linewidth]{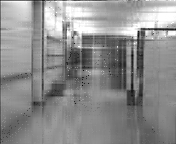}}%
	\hfil
	\subfloat{\includegraphics[width=0.095\linewidth]{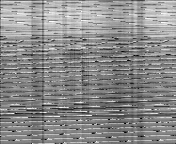}}%
	\hfil
	\setcounter{subfigure}{0}
	\subfloat{\includegraphics[width=0.095\linewidth]{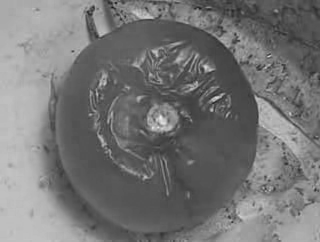}}%
	\hfil
	\subfloat{\includegraphics[width=0.095\linewidth]{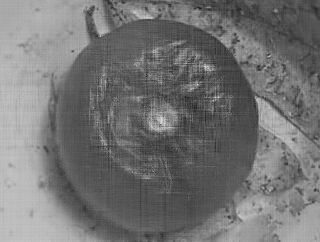}}%
	\hfil	
	\subfloat{\includegraphics[width=0.095\linewidth]{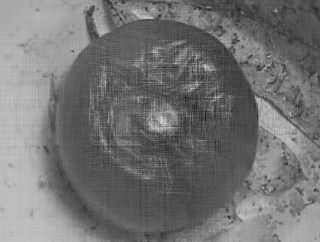}}%
	\hfil
	\subfloat{\includegraphics[width=0.095\linewidth]{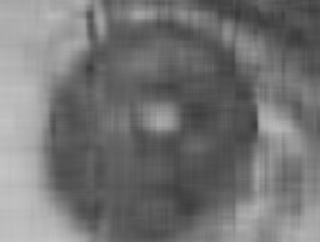}}%
	\hfil					
	\subfloat{\includegraphics[width=0.095\linewidth]{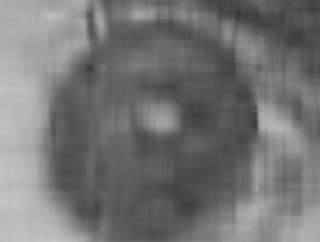}}%
	\hfil
	\subfloat{\includegraphics[width=0.095\linewidth]{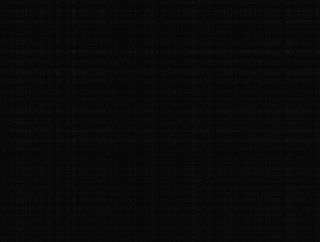}}%
	\hfil
	\subfloat{\includegraphics[width=0.095\linewidth]{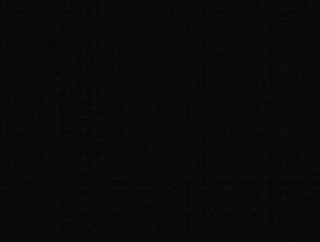}}%
	\hfil
	\subfloat{\includegraphics[width=0.095\linewidth]{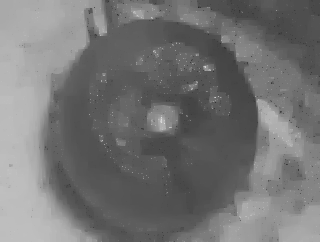}}%
	\hfil					
	\subfloat{\includegraphics[width=0.095\linewidth]{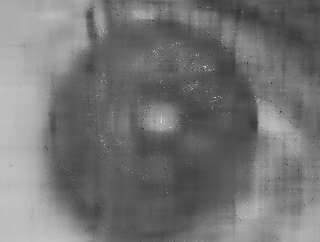}}%
	\hfil
	\subfloat{\includegraphics[width=0.095\linewidth]{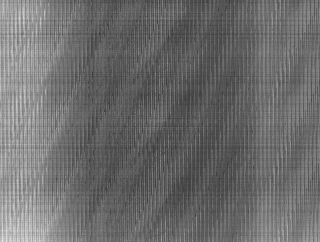}}%
	\hfil
	\setcounter{subfigure}{0}
	\subfloat{\includegraphics[width=0.095\linewidth]{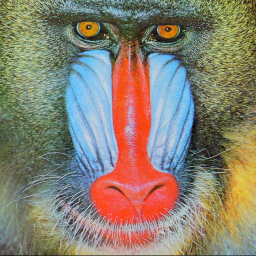}}%
	\hfil
	\subfloat{\includegraphics[width=0.095\linewidth]{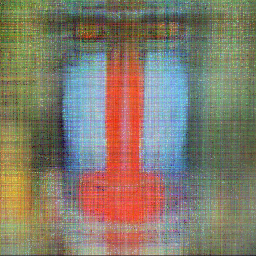}}%
	\hfil	
	\subfloat{\includegraphics[width=0.095\linewidth]{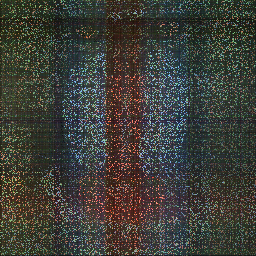}}%
	\hfil
	\subfloat{\includegraphics[width=0.095\linewidth]{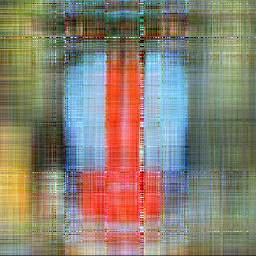}}%
	\hfil					
	\subfloat{\includegraphics[width=0.095\linewidth]{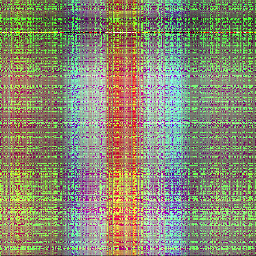}}%
	\hfil
	\subfloat{\includegraphics[width=0.095\linewidth]{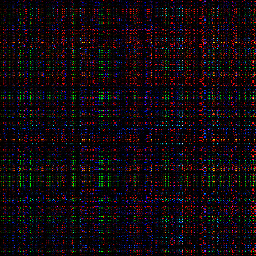}}%
	\hfil
	\subfloat{\includegraphics[width=0.095\linewidth]{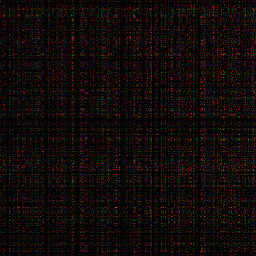}}%
	\hfil
	\subfloat{\includegraphics[width=0.095\linewidth]{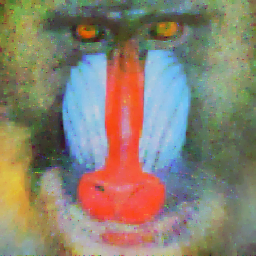}}%
	\hfil					
	\subfloat{\includegraphics[width=0.095\linewidth]{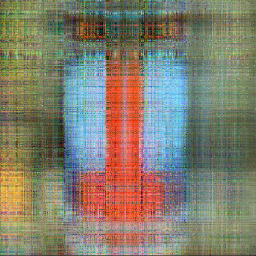}}%
	\hfil
	\subfloat{\includegraphics[width=0.095\linewidth]{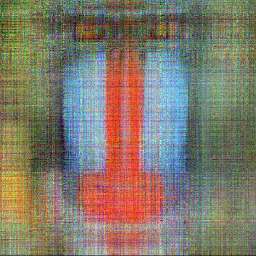}}%
	\hfil
	\setcounter{subfigure}{0}
	\subfloat{\includegraphics[width=0.095\linewidth]{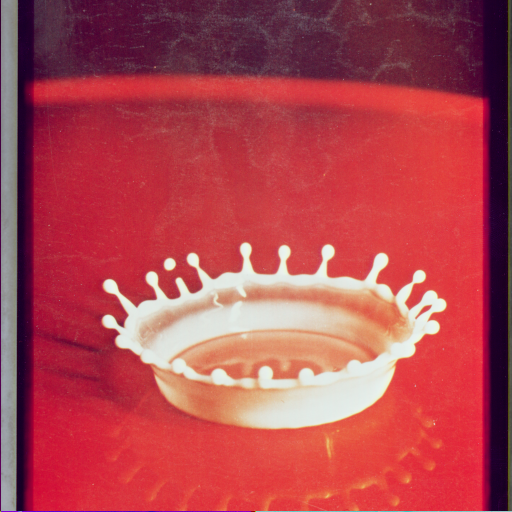}}%
	\hfil
	\subfloat{\includegraphics[width=0.095\linewidth]{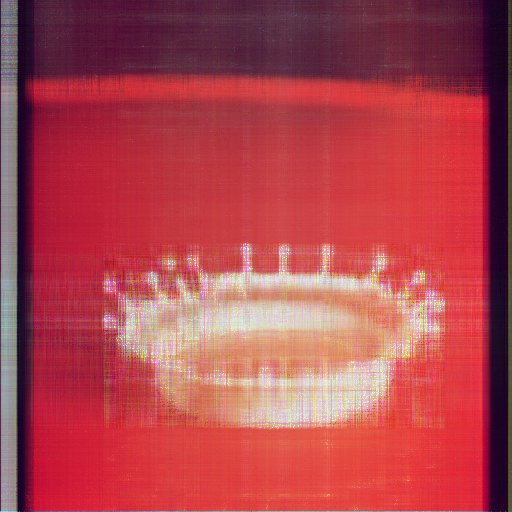}}%
	\hfil	
	\subfloat{\includegraphics[width=0.095\linewidth]{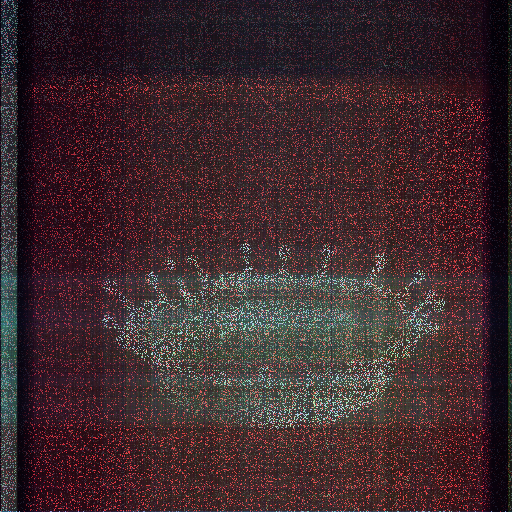}}%
	\hfil
	\subfloat{\includegraphics[width=0.095\linewidth]{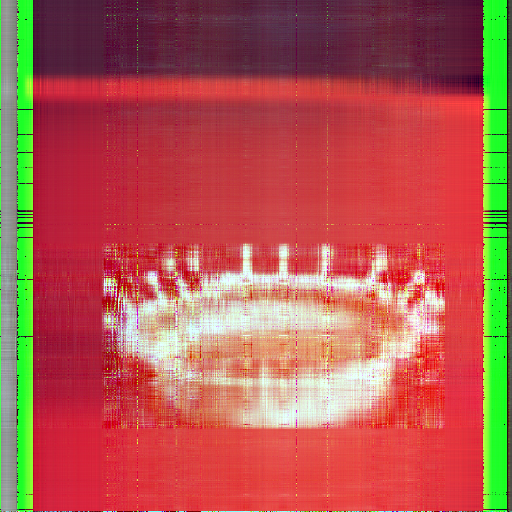}}%
	\hfil					
	\subfloat{\includegraphics[width=0.095\linewidth]{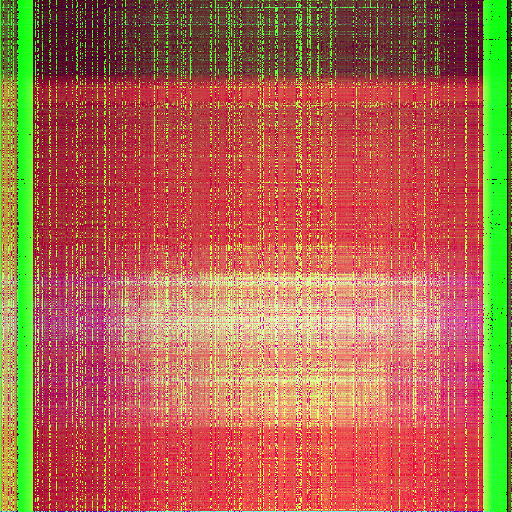}}%
	\hfil
	\subfloat{\includegraphics[width=0.095\linewidth]{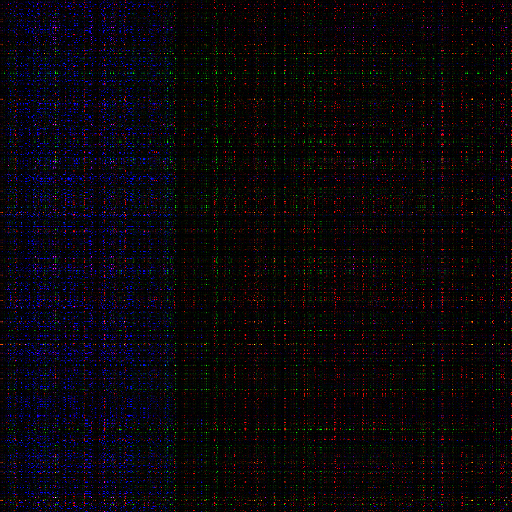}}%
	\hfil
	\subfloat{\includegraphics[width=0.095\linewidth]{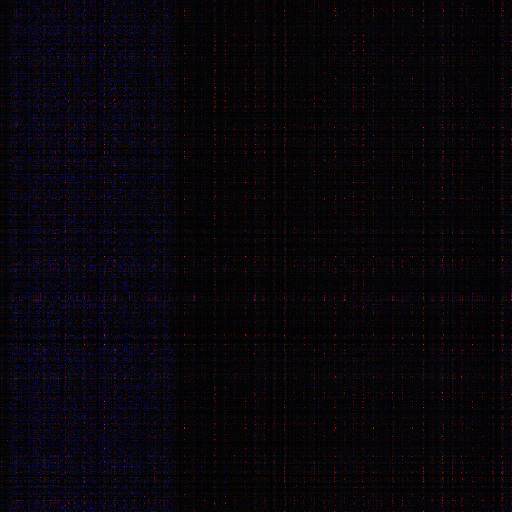}}%
	\hfil
	\subfloat{\includegraphics[width=0.095\linewidth]{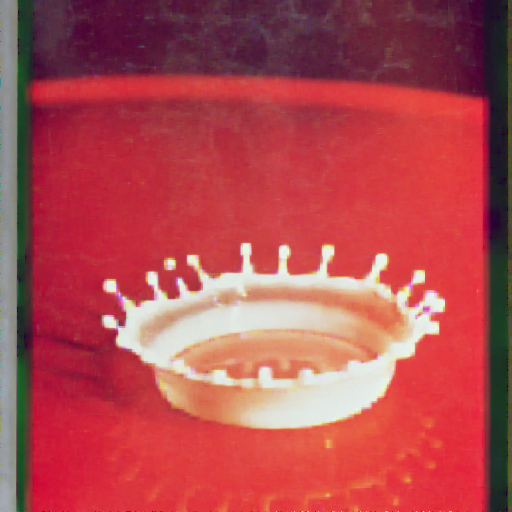}}%
	\hfil					
	\subfloat{\includegraphics[width=0.095\linewidth]{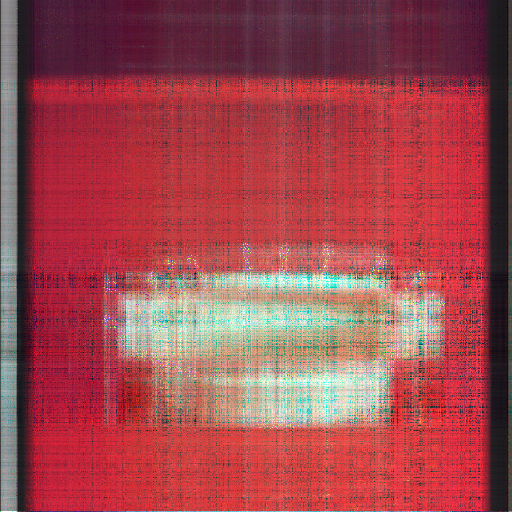}}%
	\hfil
	\subfloat{\includegraphics[width=0.095\linewidth]{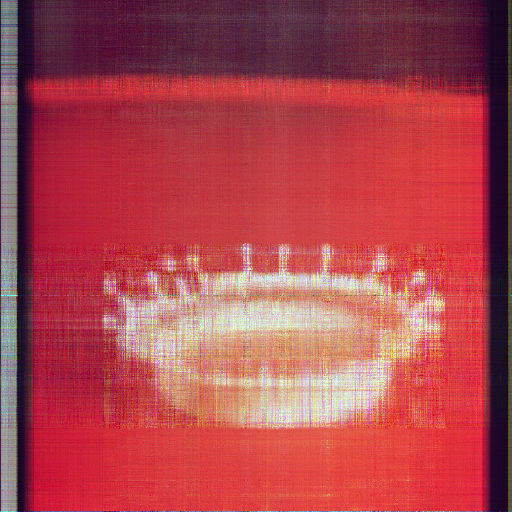}}%
	\hfil
	\setcounter{subfigure}{0}
	\subfloat{\includegraphics[width=0.095\linewidth]{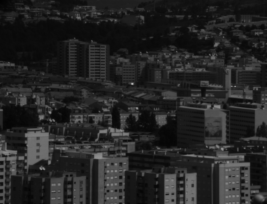}}%
	\hfil
	\subfloat{\includegraphics[width=0.095\linewidth]{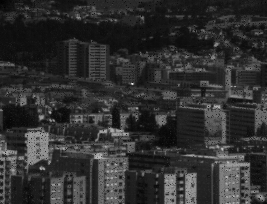}}%
	\hfil	
	\subfloat{\includegraphics[width=0.095\linewidth]{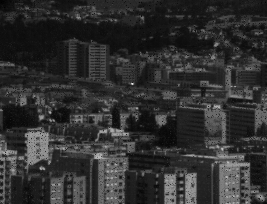}}%
	\hfil
	\subfloat{\includegraphics[width=0.095\linewidth]{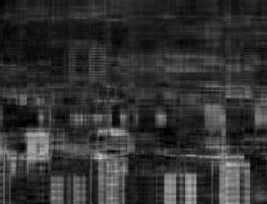}}%
	\hfil					
	\subfloat{\includegraphics[width=0.095\linewidth]{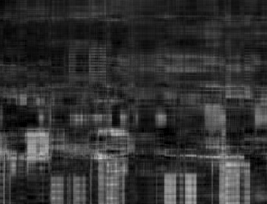}}%
	\hfil
	\subfloat{\includegraphics[width=0.095\linewidth]{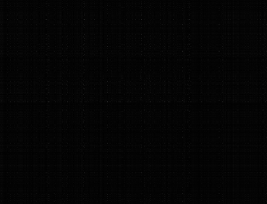}}%
	\hfil
	\subfloat{\includegraphics[width=0.095\linewidth]{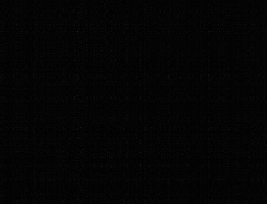}}%
	\hfil
	\subfloat{\includegraphics[width=0.095\linewidth]{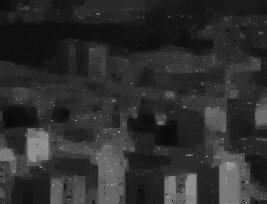}}%
	\hfil					
	\subfloat{\includegraphics[width=0.095\linewidth]{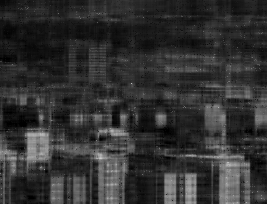}}%
	\hfil
	\subfloat{\includegraphics[width=0.095\linewidth]{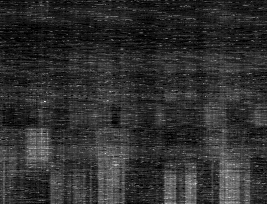}}%
	\hfil
	\setcounter{subfigure}{0}
	\subfloat[Original]{\includegraphics[width=0.095\linewidth]{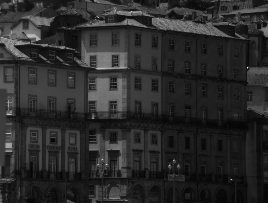}}%
	\hfil	
	\subfloat[Prop]{\includegraphics[width=0.095\linewidth]{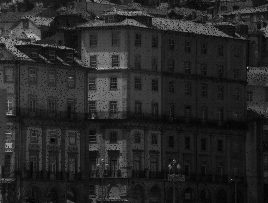}}%
	\hfil	
	\subfloat[Dual]{\includegraphics[width=0.095\linewidth]{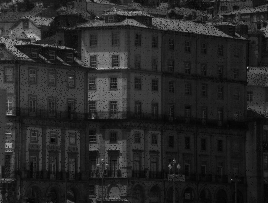}}%
	\hfil
	\subfloat[RPrec]{\includegraphics[width=0.095\linewidth]{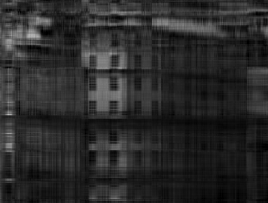}}%
	\hfil					
	\subfloat[geomCG]{\includegraphics[width=0.095\linewidth]{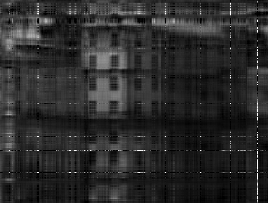}}%
	\hfil
	\subfloat[NCPC]{\includegraphics[width=0.095\linewidth]{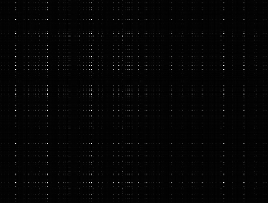}}%
	\hfil
	\subfloat[TMac]{\includegraphics[width=0.095\linewidth]{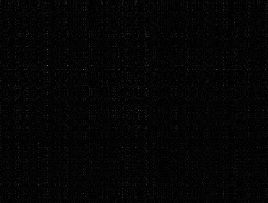}}%
	\hfil
	\subfloat[T-LRTC]{\includegraphics[width=0.095\linewidth]{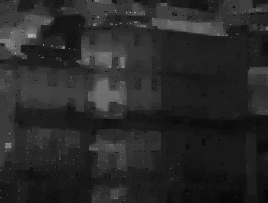}}%	
	\hfil					
	\subfloat[S-LRTC]{\includegraphics[width=0.095\linewidth]{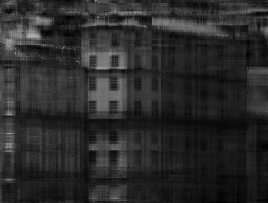}}%
	\hfil
	\subfloat[FFW]{\includegraphics[width=0.095\linewidth]{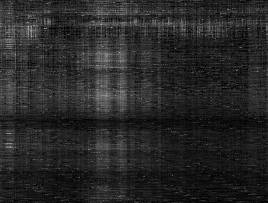}}%		
	\caption{Original and Reconstructed Images for Different Algorithms given 10\% of the fraction as training data. The datasets shown from top to bottom are Container, Hall, Tomato, Baboon, Splash, Braga, Ribeira respectively. For videos, a random frame was chosen. From left to right: Original, Proposed, Dual \cite{dual}, RPrecon \cite{RPrecon}, geomCG \cite{geomCG}, NCPC \cite{ncpc}, TMac \cite{tmac}, LRTC-TV(T-LRTC) \cite{lrtc_tv}, SMF-LRTC(S-LRTC) \cite{smf_lrtc} and FFW-LRTC \cite{ffw_lrtc}.}
	\label{fig:reconst_images}
\end{figure*}

\subsection{Hyperparameters} 
The ranks $(r_1,r_2,r_3)$ are chosen as $(10,10,5)$ for all datasets, except color images where we chose $(10,10,3)$ since the dimension in mode-3 is less than $5$. We chose the regularization constants $\lambda_k$'s according to \cite{dual}. The maximum iterations for outer optimization problem \eqref{eqn:outer_manifold_problem_nn} was set to $200$ as no improvement in {\tt RMSE} is seen after $100$ iterations on most of the datasets. For the baseline algorithms using rank as a hyperparameter, we have chosen the same rank as in our case since it is sufficient for a variety of datasets (see \cite{dual}). Additional hyperparameters for each baseline were set as indicated in the code provided by the authors.

We consider the effect of {\tt RMSE} on the variation in hyperparameters. Fig. \ref{fig:ablation_plots}(a) shows the variation of {\tt RMSE} over iterations of the proposed algorithm. We see that the {\tt RMSE} decreases as the algorithm proceeds, and the decrease is rapid in the initial iterations. The {\tt RMSE} decreases monotonically, so we have chosen $200$ iterations as the threshold to guarantee good solutions. Fig. \ref{fig:ablation_plots}(b) shows the variation of {\tt RMSE} with the rank. Increasing rank decreases the {\tt RMSE}, but it quickly saturates to $10$, which can be due to the inherent rank of the dataset. This justifies our choice of hyperparameters.
% As the algorithm proceeds, the {\tt RMSE} decreases as
% The effect of parameters on RMSE is shown in \ref{} - iters v/s rmse; rank v/s rmse plots.

\subsection{Image Completion}

The task of image completion is to reconstruct the original image tensor, given only partial observations. As mentioned earlier, we have randomly sampled $10\%$ of observations for training. We have experimented with several hyperspectral images (see Table \ref{tab:description_datasets}, \cite{hyperspectral_images}) where each data tensor contains stack of images measured at different wavelengths. Following \cite{dual} we resized these datasets to $203\times 268\times 33$ using bilinear interpolation. We have also considered two color images (see Table \ref{tab:description_datasets}, \cite{color_images}) which are naturally represented as third-order tensors.

We report the {\tt RMSE} in Table \ref{tab:rmse} and some of the reconstructed images in Fig.\;\ref{fig:reconst_images}. Our proposed algorithm outperforms the baseline algorithms in all hyperspectral datasets considered. The reconstructed images are of good quality, given only $10\%$ of data for training. In color image datasets, {\tt LRTC-TV} performs best. We expect that this is because the original images have the local smoothness property, which is exploited by {\tt LRTC-TV} through the smoothness constraints it enforces. However, the low-rank and nonnegative structure does not preserve such smoothness, which explains the performance of other algorithms. Nevertheless, the proposed algorithm achieves the best {\tt RMSE} next to {\tt LRTC-TV}. We believe this indicates the usefulness of nonnegative constraints. 

For hyperspectral images, {\tt LRTC-TV} performs badly. On {\tt Braga}, {\tt RMSE} of {\tt LRTC-TV} is 10 times that of the proposed algorithm. By comparing the reconstructed images, it can be seen that the smooth image produced by {\tt LRTC-TV} is an imperfect reconstruction, suggesting the lack of local smoothness property in this dataset. 

The effect of nonnegativity is more pronounced in color images, where the proposed algorithm achieves 3 times lower {\tt RMSE} on {\tt Baboon} and 5 times lower {\tt RMSE} on {\tt Splash} compared to {\tt Dual}. We see this effect in the reconstructed image of {\tt Baboon} and {\tt Splash}, where, perhaps due to negative entries, the reconstructed images appear darker.

\subsection{Video Completion}

Video completion task is the reconstruction of the frames of the video from the partial observations given. We considered several gray-scale videos (see Table \ref{tab:description_datasets}, \cite{video_sequences}) which form third-order tensors.

Fig. \ref{fig:component_images} shows the component frames, $\W^{(k)}$'s, of the reconstructed frames, $\W$, of the proposed algorithm. For the video data, most of the information varies along the frames (i.e., along mode-$3$ rather than the other modes). Consequently, we see the frames of $\W^{(1)}$ and $\W^{(2)}$ to have less information, whereas the frame of $\W^{(3)}$ is close to the original frame. As we enforce the low-rank constraint on mode-$k$ of $\W^{(k)}$, each component has a compact representation that captures the original scene very well.

The proposed algorithm achieves the least {\tt RMSE} compared to the baselines (see Table \ref{tab:rmse}). In {\tt Hall}, the {\tt RMSE} scores of all baseline algorithms, except {\tt Dual}, is at least two times that of the proposed algorithm. Despite the increase in dimensions of the tensor compared to hyperspectral images, choosing the same rank $(10,10,5)$ gives the best {\tt RMSE} scores. The reconstructed image shown in Fig. \ref{fig:reconst_images} are significantly clear as indicated by the {\tt RMSE} scores. As mentioned earlier, in {\tt Tomato} and {\tt Hall}, we believe that the lack of local smoothness property leads to the failure of {\tt LRTC-TV} algorithm.

\section{Conclusion}

We have proposed a novel factorization for nonnegative low-rank tensor completion, $\W = \sum_{k=1}^K (\Z+\s)\times_k U_kU_k^T$. The factorization decouples the nonnegative constraint and low-rank constraint on $\s$ and $U_kU_k^T$ respectively. The resultant problem has a geometric structure in the constraints. We exploit this structure to propose a Riemannian optimization algorithm to solve the problem. On several real-world datasets, our proposed algorithm outperforms the state-of-the-art tensor completion algorithms. 

\section*{Acknowledgement}

Tanmay Kumar Sinha was supported by IIIT seed grant. Jayadev Naram thanks IHub-Data, IIIT Hyderabad for a research fellowship.

{\small
\bibliographystyle{ieee_fullname}
% \bibliography{egbib}.

}
\end{document}